\newtheorem{theorem}{Theorem}
\newtheorem{lemma}{Lemma}
\newenvironment{proof}{\paragraph{Proof}}{\hfill$\square$}
\begin{document}

\title{Learning from Satisfying Assignments Using Risk Minimization}

\author[1]{Manjish Pal\thanks{manjishster@gmail.com}}
\author[2]{Subham Pokhriyal\thanks{subham0100@gmail.com}}
\affil[1]{IIT-Kharagpur, India}
\affil[2]{Chang Gung University, Taiwan}

\maketitle

\begin{abstract}
In this paper we consider the problem of Learning from Satisfying Assignments introduced by \cite{1} of finding a distribution that is a close approximation to the uniform distribution over the satisfying assignments of a low complexity Boolean function $f$. In a later work \cite{2} consider the same problem but with the knowledge of some continuous distribution $D$ and the objective being to estimate $D_f$, which is $D$ restricted to the satisfying assignments of an unknown Boolean function $f$. We consider these problems from the point of view of parameter estimation techniques in statistical machine learning and prove similar results that are based on standard optimization algorithms for Risk Minimization. 

\end{abstract}

\section{Introduction}
\emph{Computational Learning Theory} is a field of theoretical computer science that mostly deals with computational complexity theoretic aspects of machine learning with its origins from works of Valiant \cite{5} that introduced the notion of Probably Approximately Correct (PAC) learnability. Learning of Boolean Functions has become an important part of this topic with a variety of results obtained over the past three decades or so. We consider the problem mentioned in the previous sections of estimating the uniform distribution over the satisfying assignments of an unknown Boolean function $f$ from the point of view of standard parameter estimation techniques in Statistical Machine Learning which are highly well established but not much used in the context of Boolean function analysis. An extensive motivation and background for the problem is mentioned in \cite{1,2}. The problem has been considered in another set up in which we assume that $D$ is a known continuous distribution like Gaussian, log-concave etc. and we are given i.i.d. samples from $D_f$, which is defined as $D$ restricted to satisfying assignments of $f$ where $f$ is again an unknown Boolean function, and we intend to estimate $D_f$. In these works the authors prove both algorithmic and impossibility results depending on the nature of $f$ (for eg. $f$ being a threshold function w.r.t. a low degree polynomial). In this paper we restrict our attention to the original version of the problem and hope that our ideas would translate to other case as well.\\ 
Parameter estimation is a widely explored topic in machine learning with a variety of techniques which have been developed over several years. The main goal in this exploration is to approximate certain unknown distribution (from which the data is assumed to be coming) by a distribution that depend on certain parameters $\theta$; and use certain optimization procedures to find the values of the parameters $\theta$. In this paper we consider the same problem of learning the distribution of satisfying assignments but we use standard parameter estimation techniques for solving this problem. More formally, for an unknown distribution $P(x)$ chosen from an unknown distribution $D$ we approximate it with $P_{\theta}(x)$ and compute the parameters $\theta$ by maximizing the log likelihood function. It turns out that this optimization is equivalent to minimizing the expected log loss, which in turn is Risk Minimization under the distribution $D$. We use these relations along with bounds that relate KL-divergence and $l_1$ distance to get results similar and comparable with De et al. 
\section{Risk Minimization}
Risk Minimization \cite{4} is a statistical machine learning framework in which we formally define the notion of error of a classifier. The risk minimization can be defined by either w.r.t the given data i.e. Empirical Risk Minimization (ERM) or it can be defined w.r.t the distribution from which the data is assumed to be coming from. More formally, ERM for a hypothesis $h$ and a dataset $S$ with $|S|= m$ is defined as 
\begin{center}
    $L_{S}(h) = |\{i \in [m]: h(x_i) \neq y_i\}|/m$
\end{center}
whereas the risk w.r.t. the distribution $D$ the risk is given as
\begin{center}
    $L_{S}(f,h) = \mathbb{P}_{x \sim D}[f(x) \neq h(x)]$ 
\end{center}
We will use these notions of risk minimization to revisit some already established relation between Maximum Likelihood Estimator (MLE) and Empirical Risk Minimization when we are dealing with a particular loss function. In general when we have the loss function $l(\textbf{w},x)$ as 
\begin{center}
    $l(\textbf{w},x) = -\log (P_{\textbf{w}}[x])$
\end{center}
where $P_w[x]$ is the parametric estimate of the true and unknown $P[x]$ (the probability of choosing $x$ from the unknown distribution $D$) and $\textbf{w}$ is the unknown parameter to be estimated. It is known that according to this choice of loss function, the MLE is same as the Risk Minimization i.e.
\begin{eqnarray*}
E[l(\textbf{w},x)] &=& - \large \sum_x P[x] \log P_{\textbf{w}}[x] \\
                &=&  \underbrace{\large \sum_x P[x] \log \frac{P[x]}{P_{\textbf{w}}[x]}} + \large \sum_x P[x] \log \frac{1}{P[x]} 
\end{eqnarray*}
We notice that the term in the above equation that is in braces is the \emph{Kullback Leibler Divergence} of the two distributions $P[x]$ and $P_{\textbf{w}}[x]$ which we will denote by $KL(P,P_{\textbf{w}})$. Intuitively, minimizing the term $E[l(\textbf{w},x)]$ w.r.t. $\textbf{w}$ leads to the KL divergence being very small because the other term is independent of $\textbf{w}$. There are two approaches that can be followed to minimize $E[l(\textbf{w},x)]$: first is to use the celebrated \emph{Expectation Maximization Algorithm} which is not of much use for us directly because the algorithm is well known to get stuck at local optima. Instead we prefer the second approach that is based on using the Stochastic Gradient Descent algorithm to serve our purpose that is guaranteed to return a value of $\textbf{w}$ that is close to the global optimum. In what follows we describe the SGD algorithm and later describe how these results can be used to get bounds similar to prior works in the context of learning from the satisfying assignments of a Boolean function.  

\section{Stochastic Gradient Descent}
Over the past few years, many learning algorithms for minimizing risk functions have been proposed in the field of convex optimization. One of the important algorithms for convex optimization that is widely used in machine learning is called Stochastic Gradient Descent (SGD). 
In SGD we try to minimize risk function $L_D(\textbf{w})$ in which we are not aware of the unknown distribution $D$ and hence we can't directly compute gradient for $L_D(w)$ that is needed for standard gradient descent.
The approach of SGD is to initialize the gradient in random direction and use update rule of gradient descent algorithm for optimization. A major component in the SGD algorithm is the notion of the set of  \emph{subgradients} that is denoted by $\partial l(\textbf{w},z)$ that is mentioned in the following vanilla version of SGD. For a detailed analysis of SGD algorithm in the context of machine learning can be obtained from \cite{4}.

\begin{algorithm}

\SetAlgoLined

\textbf{Input:} Scalar $ \eta>0 $, integer $T>0$ \;
\textbf{Initialization:} $\textbf{w}^{(1)} = 0$\;
\For{$t = 1;\ t \leq T;\ t = t + 1$}{
  sample $x \sim D$  
  pick $v_t \in \partial l(\textbf{w}^t,x)$ 
  \textbf{update} $\textbf{w}^{(t+1)} = \textbf{w}^{(t)} - \eta v_t$ \;
  }

\textbf{Output} $\bar{\textbf{w}}=\frac{1}{T} \sum_{t=1}^T \textbf{w}^{(t)} $\;
 
\caption{SGD($X,\eta,T$)} 
\end{algorithm}

\begin{theorem} Consider a convex-Lipschitz-bounded learning problem with parameters $ \rho $, B. Then, for every $ \epsilon > 0$, if we run the SGD method for minimizing
$L_D{(\textbf{w})}$ with a number of iterations (i.e., number of examples)

\begin{center} 
\Large{$T \geq \frac{B^2\rho^2}{\epsilon^2}$}  
\end{center}

and with $\eta =\sqrt{\frac{B^2\rho^2}{\epsilon^2}}$, then the output of SGD satisfies

\begin{center}
$E[L_D(\bar{\textbf{w}})] \leq \min\limits_{\textbf{w} \in H} L_D(\textbf{w}) +\epsilon$.
\end{center}
\end{theorem}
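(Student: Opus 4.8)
The plan is to follow the standard regret-based analysis of stochastic gradient descent and then convert the regret bound into a guarantee on the expected risk by averaging the iterates and taking expectations. The starting point is a purely algebraic bound on the trajectory of the iterates. First I would expand the squared distance between a single update and an arbitrary comparator $\textbf{w}^\star \in H$:
\begin{eqnarray*}
\|\textbf{w}^{(t+1)} - \textbf{w}^\star\|^2 = \|\textbf{w}^{(t)} - \textbf{w}^\star\|^2 - 2\eta \langle \textbf{w}^{(t)} - \textbf{w}^\star, v_t\rangle + \eta^2\|v_t\|^2,
\end{eqnarray*}
and then sum over $t = 1, \ldots, T$. The left-hand side telescopes, and since $\textbf{w}^{(1)} = 0$ the sum collapses to an expression involving only $\|\textbf{w}^\star\|^2$ and $\sum_t \|v_t\|^2$. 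Rearranging yields the key inequality
\begin{eqnarray*}
\sum_{t=1}^T \langle \textbf{w}^{(t)} - \textbf{w}^\star, v_t\rangle \leq \frac{\|\textbf{w}^\star\|^2}{2\eta} + \frac{\eta}{2}\sum_{t=1}^T \|v_t\|^2.
\end{eqnarray*}

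Next I would bring in the two structural hypotheses. Because the loss is $\rho$-Lipschitz, every subgradient satisfies $\|v_t\| \leq \rho$, so that $\sum_t\|v_t\|^2 \leq \rho^2 T$; and because the problem is $B$-bounded I may take the minimizer $\textbf{w}^\star$ of $L_D$ over $H$ with $\|\textbf{w}^\star\| \leq B$. Substituting these two estimates converts the previous inequality into a deterministic bound of the form $\frac{B^2}{2\eta} + \frac{\eta\rho^2 T}{2}$ on the accumulated inner products.

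The step I expect to be the crux is passing from these inner products to the expected risk. Convexity of each instantaneous loss gives $l(\textbf{w}^{(t)},x_t) - l(\textbf{w}^\star,x_t) \leq \langle \textbf{w}^{(t)} - \textbf{w}^\star, v_t\rangle$, so it remains to take expectations. The delicate point is that $\textbf{w}^{(t)}$ depends on the earlier samples $x_1,\ldots,x_{t-1}$ but is independent of the freshly drawn $x_t$; conditioning on the history and using that $E[v_t \mid \textbf{w}^{(t)}]$ is itself a subgradient of $L_D$ at $\textbf{w}^{(t)}$, the law of total expectation turns each per-step inner product into $E[L_D(\textbf{w}^{(t)}) - L_D(\textbf{w}^\star)]$. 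Summing over $t$ and invoking Jensen's inequality on the convex $L_D$ with the averaged output $\bar{\textbf{w}} = \frac{1}{T}\sum_t \textbf{w}^{(t)}$ yields
\begin{eqnarray*}
E[L_D(\bar{\textbf{w}})] - \min_{\textbf{w}\in H} L_D(\textbf{w}) \leq \frac{B^2}{2\eta T} + \frac{\eta\rho^2}{2}.
\end{eqnarray*}

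Finally I would optimize the right-hand side over the step size $\eta$. The bound is minimized by choosing $\eta$ proportional to $B/(\rho\sqrt{T})$, which renders the right-hand side of order $B\rho/\sqrt{T}$; requiring this to be at most $\epsilon$ then forces exactly the stated iteration count $T \geq B^2\rho^2/\epsilon^2$ together with the corresponding choice of $\eta$, which completes the argument. I anticipate that essentially all of the difficulty is concentrated in the conditioning/total-expectation argument of the third paragraph, since the telescoping identity and the final optimization are routine once the unbiasedness of the stochastic subgradient is established.
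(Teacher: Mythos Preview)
Your argument is the standard regret-based proof of the SGD convergence theorem and is correct as written: the telescoping identity, the Lipschitz and boundedness substitutions, the conditioning step yielding unbiased subgradients of $L_D$, Jensen on the averaged iterate, and the final balancing of $\eta$ all go through exactly as you outline. There is nothing to compare against, however, because the paper does not actually prove Theorem~1; it merely quotes the statement and refers the reader to \cite{4} (Shalev-Shwartz and Ben-David) for the analysis. Your write-up is in fact precisely the textbook argument one finds there, so in that sense it coincides with the ``paper's own proof'' by proxy.

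One minor remark: the step size that falls out of your optimization is $\eta = B/(\rho\sqrt{T})$, which under $T = B^2\rho^2/\epsilon^2$ becomes $\eta = \epsilon/\rho^2$; the expression $\eta = \sqrt{B^2\rho^2/\epsilon^2}$ printed in the paper's statement appears to be a typo for $\sqrt{B^2/(\rho^2 T)}$, so do not try to force your derivation to reproduce it literally.
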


\section{Boolean Functions and Main Result}
In this section we describe how our results relate with the results in the context of the problem of learning from satisfying assignments of a Boolean Function. In the Boolean Function setting we are given satisfying assignments of an unknown Boolean function $f:\{-1,1\}^n \rightarrow \{-1,1\}$ and our objective is to output a highly accurate estimate of $U_{f^{-1}(1)}$. In a different version of the problem one is given a continuous distribution $D$ and i.i.d. samples of $D_f$ where $D_f$ is obtained by restricting $D$ to satisfying assignments of an unknown Boolean function $f$. The objective again is to get an accurate estimate of $D_f$. In those papers the authors measure the distance of the estimated distribution and the actual distribution by \emph{variation distance} that is also the $l_1$-norm of the two probability distributions. In this paper we first estimate there distance by KL divergence and then use the distortions bounds to translate the error bounds to the $l_1$-norm. 

\begin{lemma}
If $\bar{\textbf{w}}$ is the value returned by the algorithm then
\begin{center} 
$KL(P,P_{\bar{\textbf{w}}}) \leq \epsilon \mbox{ and } l_1(P,P_{\bar{\textbf{w}}}) \leq \sqrt{2\epsilon}$
\end{center}
\end{lemma}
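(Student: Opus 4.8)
The plan is to connect the output of the SGD routine to the KL divergence through the identity already derived in Section 2, and then convert the resulting KL bound into an $l_1$ bound via Pinsker's inequality. First I would recall that with the log-loss $l(\textbf{w},x) = -\log P_{\textbf{w}}[x]$ the population risk decomposes as
\[
L_D(\textbf{w}) = E[l(\textbf{w},x)] = KL(P,P_{\textbf{w}}) + \sum_x P[x]\log\frac{1}{P[x]},
\]
where the second summand is the entropy $H(P)$ of the true distribution and is independent of $\textbf{w}$. Since $KL(P,P_{\textbf{w}}) \geq 0$ with equality exactly when $P_{\textbf{w}} = P$, minimising $L_D$ over the hypothesis class $H$ yields $\min_{\textbf{w}\in H} L_D(\textbf{w}) = H(P)$, provided the parametric family is rich enough to contain the true distribution.

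Next I would invoke Theorem 1. Assuming the log-loss is convex, $\rho$-Lipschitz, and that $H$ is $B$-bounded, running SGD with the prescribed $T$ and $\eta$ gives $E[L_D(\bar{\textbf{w}})] \leq \min_{\textbf{w}\in H} L_D(\textbf{w}) + \epsilon$. Substituting the two facts above, the entropy terms $H(P)$ cancel on both sides, leaving $KL(P,P_{\bar{\textbf{w}}}) \leq \epsilon$, which is the first claimed bound. Finally, to obtain the $l_1$ bound I would apply Pinsker's inequality, $\|P - Q\|_{\mathrm{TV}} \leq \sqrt{KL(P,Q)/2}$; since the $l_1$ distance between probability distributions is twice the total variation distance, this gives $l_1(P,P_{\bar{\textbf{w}}}) \leq \sqrt{2\,KL(P,P_{\bar{\textbf{w}}})} \leq \sqrt{2\epsilon}$, completing the argument.

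The hard part will be justifying the two assumptions that the argument quietly relies on. First, Theorem 1 only controls $L_D(\bar{\textbf{w}})$ in expectation over the randomness of the sampled examples, whereas the lemma asserts a bound on the returned $\bar{\textbf{w}}$ itself; turning the expectation statement into one about the actual output requires either reading $KL(P,P_{\bar{\textbf{w}}}) \leq \epsilon$ as an expected bound or upgrading it through Markov's inequality at the cost of a failure probability. Second, the identification $\min_{\textbf{w}\in H} L_D(\textbf{w}) = H(P)$ hinges on realizability; without it one obtains only $KL(P,P_{\bar{\textbf{w}}}) \leq \epsilon + \min_{\textbf{w}\in H} KL(P,P_{\textbf{w}})$, and the residual approximation error would need to be handled separately. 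I would also need to check that the log-loss of the specific parametric family at hand genuinely satisfies the convex-Lipschitz-bounded hypotheses of Theorem 1, since convexity of $-\log P_{\textbf{w}}[x]$ in $\textbf{w}$ and a uniform Lipschitz constant are exactly what the cited SGD guarantee presupposes.
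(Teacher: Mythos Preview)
Your proposal is correct and follows essentially the same route as the paper: decompose $L_D(\textbf{w})$ as $KL(P,P_{\textbf{w}})+H(P)$, use realizability so that the minimum over $H$ equals $H(P)$ (the paper phrases this as $KL(P,P_{\textbf{w}'})=0$ at the minimizer), invoke Theorem~1 to cancel the entropy term, and finish with Pinsker's inequality. Your added discussion of the hidden assumptions---the expectation in Theorem~1 versus a per-run guarantee, the realizability requirement, and the convex--Lipschitz--bounded hypotheses on the log-loss---is not present in the paper's proof but is exactly right, and in fact identifies the same gaps that the paper leaves implicit.
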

\begin{proof}
Let $\min\limits_{\textbf{w} \in H} L_D(\textbf{w}) = A(\textbf{w}')$ and $\Delta =\sum_x P[x] \log \frac{1}{P[x]}$, Clearly $KL(P,P_{\textbf{w}'}) = 0$. Thus $L_D(\bar{\textbf{w}}) =  KL(P,P_{\bar{\textbf{w}}}) + \Delta$. Therefore, 
\begin{center}
$|A(\textbf{w}') - L_D(\bar{\textbf{w}})| = |KL(P,P_{\textbf{w}'}) - KL(P,P_{\bar{\textbf{w}}})|$
\end{center}
From Theorem 1 we know that $|A(\textbf{w}') - L_D(\bar{\textbf{w}})| \leq \epsilon$. Hence, $KL(P,P_{\bar{\textbf{w}}}) \leq \epsilon $. From Pinsker Inequality we have $l_1^2(P,P_{\bar{\textbf{w}}})\leq 2KL(P,P_{\bar{\textbf{w}}})$.
\begin{center}
    $l_(P,P_{\bar{\textbf{w}}}) \leq \sqrt{2\epsilon}$
\end{center}
\end{proof}

Thus we have the following result,st $1-\delta$ a probability distribution $P_{\textbf{w}}$ that is close to uniform distribution over $f^{-1}(1)$ by a variation distance of $\epsilon$.

In  our  framework,  we  notice  that  for  SGD  to  be  executed  we  need  to  know  an exact  functional  form  for $L_{D}(\textbf{w}) = E[l(\textbf{w},x)]  = g(\langle \textbf{w},x \rangle)$  in  order  to  compute $\nabla(g)$  w.r.t. \textbf{w}. From the well known \emph{No  Free  Lunch  Theorem} \cite{4} we are aware that in order to achieve learnability of any target function or distribution, we need to have some prior information about the class of hypothesis we are restricting ourselves to.  In the current set up,  we incorporate this information by assuming that $E[l(\textbf{w},x)]$ can be expressed as $g(\langle \textbf{w},x \rangle)$ where $g$ is convex $\rho$-Lipschitz function. For eg. we can choose $g$ to be any of $\sin$, sigmoid, $\sqrt{x^2 + 5}$ which are 1-Lipschitz. Our assumption about $L_{D}(\textbf{w})$ as $g(\langle \textbf{w},x \rangle)$ is similar to the assumptions made by \cite{1,2} in which authors put a restrictions on $f$ as a low degree threshold function. Although we have not been able to show a mathematical equivalence between the assumptions; both the assumptions are restricting the nature of optimization problem to certain extent.

Also w.l.o.g. we will assume that the value of $B$ used in the SGD is 1 i.e. we are optimizing inside a unit sphere, $\|\textbf{w}\|=1$. 

\textbf{Retrieving $x$ given $\bar{\textbf{w}}$}: Using the ``reformulation'' of $L_{D}(\textbf{w})$ as $g(\langle \textbf{w},x \rangle)$, we are able retrieve $\bar{\textbf{w}}$ and $b = L_{D}(\bar{\textbf{w}})$ but our aim was to sample assignments $x$ that are from the desired distribution. For that we simply pick an $\bar{x} \in \{-1,1\}^n$ and compute $a = g(\langle \bar{\textbf{w}},\bar{x} \rangle)$ and compare with $b$. If $|a-b| \leq \epsilon_1$ then we declare that $\bar{x} \in f^{-1}(1)$ otherwise not.    

\begin{algorithm}
\SetAlgoLined
\textbf{Input}: $X = {x_1,x_2, \dots x_k}$ be the satisfying assignment of $f$\;
$\bar{w}=  SGD(X,B,\rho,\epsilon)$ 
\textbf{Output} : $P_{\bar{\textbf{w}}}$ 
 \caption{Algorithm A}
\end{algorithm}

\begin{theorem}
The Algorithm $A$ returns a probability distribution $ P_{\bar{\textbf{w}}} $ that is $\epsilon$ close to the unknown distribution $U_{f^{-1}(1)}$  w.r.t. variation distance with probability atleast $1-\delta$ and runs in time in $\Theta(\frac{n}{\epsilon^2} \log(\frac{1}{\delta}))$ where $n$ is the number of variables of the Boolean function $f$. 
\end{theorem}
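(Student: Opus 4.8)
The plan is to combine the convergence guarantee of Theorem 1 with the divergence bound of Lemma 1, and then charge the per-iteration cost of SGD to obtain the running time. The target distribution is $P = U_{f^{-1}(1)}$, the uniform distribution over the satisfying assignments, and $P_{\bar{\textbf{w}}}$ is the parametric estimate produced by $\mathrm{SGD}(X,\eta,T)$. First I would fix $B=1$ (optimizing inside the unit sphere, as already assumed) and treat $\rho$ as the Lipschitz constant of the chosen link function $g$, which is a constant for the stated examples such as $\sin$ or the sigmoid. Invoking Theorem 1 with $T = \Theta(\rho^2/\epsilon^2)$ yields $E[L_D(\bar{\textbf{w}})] \le \min_{\textbf{w}} L_D(\textbf{w}) + \epsilon$, and Lemma 1 converts this into a bound on $l_1(P,P_{\bar{\textbf{w}}})$, i.e. the variation distance, after rescaling the accuracy parameter fed to SGD.

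Next I would address the ``with probability at least $1-\delta$'' clause, since Theorem 1 controls only the \emph{expected} risk while Algorithm A as written invokes SGD a single time. The standard remedy is a confidence-boosting wrapper: run the procedure independently $r = \Theta(\log(1/\delta))$ times, apply Markov's inequality so that each run achieves the target risk with constant probability, and then select the best candidate by comparing the estimated values $b = L_D(\bar{\textbf{w}})$. A Chernoff bound over the $r$ repetitions drives the failure probability below $\delta$, and this is precisely the origin of the $\log(1/\delta)$ factor in the stated complexity.

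For the running time I would bound a single SGD iteration. Each iteration samples one $x \in \{-1,1\}^n$, forms the inner product $\langle \textbf{w}^{(t)}, x \rangle$, evaluates a subgradient $v_t \in \partial l(\textbf{w}^{(t)},x)$ of the scalar function $g$, and performs the update $\textbf{w}^{(t+1)} = \textbf{w}^{(t)} - \eta v_t$; each of these is $\Theta(n)$ work, so one iteration costs $\Theta(n)$. Multiplying by $T = \Theta(1/\epsilon^2)$ iterations and the $\Theta(\log(1/\delta))$ repetitions gives the overall bound $\Theta\!\left(\frac{n}{\epsilon^2}\log\frac{1}{\delta}\right)$.

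The main obstacle I anticipate is justifying the $\Theta$ (rather than $O$) in the running time together with the expectation-to-high-probability conversion. Theorem 1 is only an upper bound on $E[L_D(\bar{\textbf{w}})]$, so the matching $\Omega$ direction must come from an information-theoretic argument: roughly, $\Omega(1/\epsilon^2)$ samples are necessary to estimate a distribution to accuracy $\epsilon$, each sample carries $\Theta(n)$ bits, and $\Omega(\log(1/\delta))$ independent trials are needed to reach confidence $1-\delta$. A secondary subtlety is the loss through Pinsker's inequality: since $l_1 \le \sqrt{2\,KL}$, achieving variation distance $\epsilon$ nominally requires $KL \le \epsilon^2/2$, which would inflate $T$ to $\Theta(1/\epsilon^4)$ and contradict the stated $\Theta(n/\epsilon^2)$ bound. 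Reconciling this requires either measuring closeness directly in KL divergence or folding the discrepancy into the definition of the accuracy parameter, and I would make this choice explicit at the outset.
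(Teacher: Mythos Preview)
Your proposal follows essentially the same route as the paper: take $B=\rho=1$, invoke Theorem~1 to get $T=\Theta(1/\epsilon^{2})$ SGD iterations, and then repeat the whole run $\Theta(\log(1/\delta))$ times with a Chernoff--Hoeffding argument to boost the confidence to $1-\delta$. You are in fact more careful than the paper's own proof, which neither explains where the factor $n$ comes from (your per-iteration $\Theta(n)$ cost argument supplies this) nor addresses the Pinsker $\epsilon^{2}$ vs.\ $\epsilon^{4}$ discrepancy you correctly flag.
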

\begin{proof}
From Theorem 1 we know that one iteration of the SGD requires $\Theta(\frac{B^2\rho^2}{\epsilon^2})$ and from the previous discussion we know that $B=1$ and we can choose an  $g(\langle \textbf{w},x \rangle)$ such that it is $\rho$-lipschitz with $\rho=1$ for example $l$ can be choosen as sigmoid, sine etc. Using Chernoff-Hoeffding \cite{4} bounds we can infer that by repeating the SGD algorithm independently $\log(\frac{1}{\delta})$ iterations, we can success probability to $1-\delta$. 

\end{proof}

\section{Conclusion}
We have shown that the Empirical Risk Minimization approach can lead us to results that are comparable \cite{2} for the problem of learning from satisfying assignments \cite{2}. The technique of machine learning models for parameter estimation helps us to estimate the uniform distribution  over $f^{-1}(1)$.
Since we know that stochastic gradient descent algorithm converges in $\Theta(\frac{n}{\epsilon^2})$ iterations, We have been able to conclude that our algorithm for estimating $U_{f^{-1}(1)}$ runs in  $\Theta(\frac{n}{\epsilon^2}\log(\frac{1}{\delta}))$ time and w.h.p. returns a probability distribution that has low variation distance with $U_{f^{-1}(1)}$.

\end{document}